\pdfoutput=1

\documentclass[10pt]{article}


\usepackage{Sty/mcr}
\usepackage{Sty/pkg}
\usepackage{Sty/thmE}

\usepackage{subcaption}

\usepackage{Sty/algorithm} 
\usepackage{Sty/algorithmic}

\usepackage{Sty/cancel}

\usepackage{Sty/soul}

\graphicspath{ {./img/} }

\usepackage{hyperref}

\usepackage{Sty/bbm}

\usepackage[margin=1.2in]{Sty/geometry}
\usepackage{Sty/natbib}


\title{Valid $P$-Value for Deep Learning-Driven Salient Region}

\date{\today}

\author{
    Daiki Miwa\thanks{Equal contribution} \\
    Nagoya Institute of Technology\\
    miwa.daiki.mllab.nit@gmail.com
   \and
    Vo Nguyen Le Duy$^\ast$\\
    RIKEN \\
    duy.mllab.nit@gmail.com
  \and
    Ichiro Takeuchi\thanks{Corresponding author} \\
    Nagoya University and RIKEN\\
    ichiro.takeuchi@mae.nagoya-u.ac.jp
}

\begin{document}

\maketitle

\begin{abstract}
Various saliency map methods have been proposed to interpret and explain predictions of deep learning models. 
Saliency maps allow us to interpret which parts of the input signals have a strong influence on the prediction results. 
However, since a saliency map is obtained by complex computations in deep learning models, it is often difficult to know how reliable the saliency map itself is. 
In this study, we propose a method to quantify the reliability of a salient region in the form of $p$-values. 
Our idea is to consider a salient region as a selected hypothesis by the trained deep learning model and employ the selective inference framework. 
The proposed method can provably control the probability of false positive detections of salient regions. 
We demonstrate the validity of the proposed method through numerical examples in synthetic and real datasets. 
Furthermore, we develop a Keras-based framework for conducting the proposed selective inference for a wide class of CNNs without additional implementation cost.

\end{abstract}

\clearpage


\section{Introduction}
Deep neural networks (DNNs) have exhibited remarkable predictive performance in numerous practical applications in various domains 
owing to their ability to automatically discover the representations needed for prediction tasks from the provided data.
To ensure that the decision-making process of DNNs is transparent and easy to understand, it is crucial to effectively explain and interpret DNN representations.
For example, in image classification tasks, obtaining \emph{salient regions} allows us to explain which parts of the input image strongly influence the classification results.

Several saliency map methods have been proposed to explain and interpret the predictions of DNN models \citep{ribeiro2016should, bach2015pixel, doshi2017towards, lundberg2017unified, zhou2016learning, selvaraju2017grad}.
However, the results obtained from saliency methods are fragile \citep{kindermans2017reliability, ghorbani2019interpretation, melis2018towards, zhang2020interpretable, dombrowski2019explanations, heo2019fooling}.
It is important to develop a method for quantifying the reliability of DNN-driven salient regions.

Our idea is to interpret salient regions as hypotheses driven by a trained DNN model and employ a statistical hypothesis testing framework.
We use the $p$-value as a criterion to quantify the statistical reliability of the DNN-driven hypotheses.
Unfortunately, constructing a valid statistical test for DNN-driven salient regions is challenging because of the \emph{selection bias}. 
In other words, because the trained DNN \emph{selects} the salient region based on the provided data, the post-selection assessment of importance is biased upwards.

To correct the selection bias and compute valid $p$-values for DNN-driven salient regions, we introduce a conditional \emph{selective inference} (SI) approach.
The selection bias is corrected by conditional SI in which the test statistic conditional on the event that the hypotheses (salient regions) are selected using the trained DNNs.
Our main technical contribution is to develop a computational method for explicitly deriving the exact (non-asymptotic) conditional sampling distribution of the salient region for a wide class convolutional neural networks (CNNs), which enables us to conduct conditional SI and compute valid $p$-values.
Figure \ref{fig:intro} presents an example of the problem setup.

\begin{figure}[!t]
\centering
\begin{subfigure}{\textwidth}
  \centering
  \includegraphics[width=\linewidth]{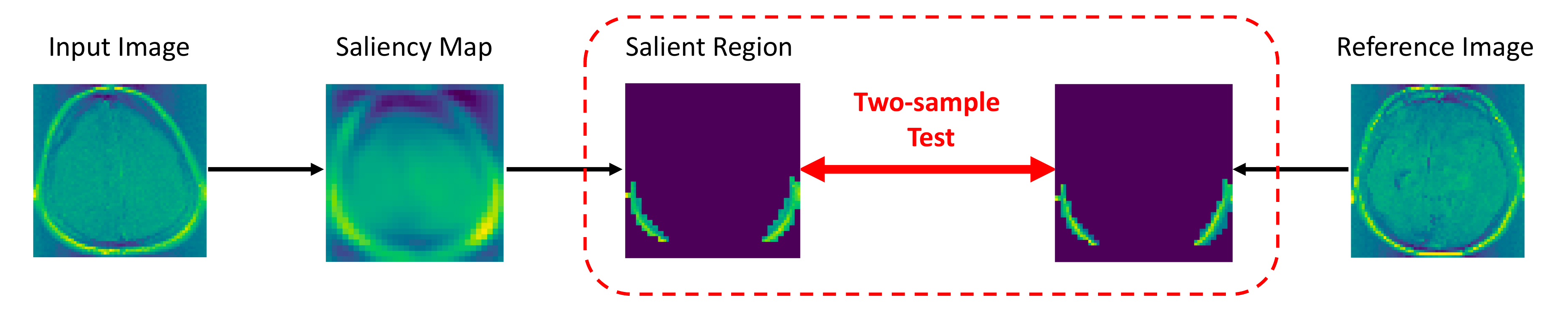} 
  \caption{Image without tumor region. The naive-$p$ = \textbf{0.00} (wrong detection) and selective-$p$ = \textbf{0.43} (true negative)}
\end{subfigure}
\hspace{4pt}
\begin{subfigure}{\textwidth}
  \centering
  \includegraphics[width=\linewidth]{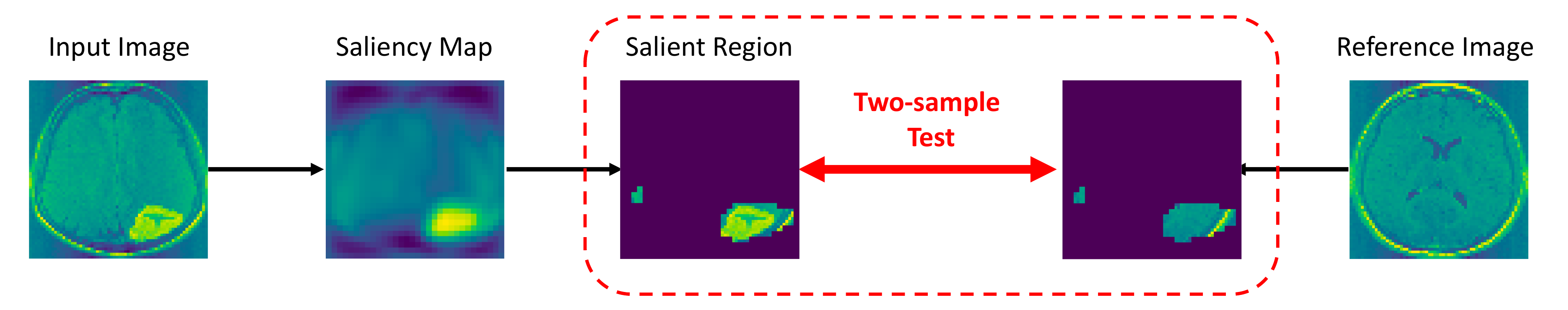}  
  \caption{Image with tumor region. The naive-$p$ = \textbf{0.00} (true positive) and selective-$p$ = \textbf{0.00} (true positive)}
\end{subfigure}
\caption{
Examples of the problem setup and the proposed method on brain tumor dataset.
By applying a saliency method called CAM \citep{zhou2016learning} on a query input image, we obtain the salient region.
Our goal is to provide the statistical significance of the salient region in the form of $p$-value by considering two-sample test between the salient region and the corresponding region in the a reference image.
Note that, since the salient region is selected based on the data, the degree of saliency in the selected region is biased upward.
In the upper image where there is no true brain tumor, the naive $p$-value which is obtained without caring the selection bias is nearly zero, indicating the false positive finding of the salient region.
On the other hand, the selective $p$-value which is obtained by the proposed conditional SI approach is 0.43, indicating that the selected saliency region is not statistically significant.
In the lower figure where there is a true brain tumor, both the naive $p$-value and the selective $p$-value are very small, indicating true positive finding.
These results illustrate that naive $p$-value cannot be used to quantify the reliability of DNN-based salient region.
In contrast, with the selective $p$-values, we can successfully identify false positive and true positive detections with a desired error rate.
}
\label{fig:intro}
\end{figure}

\paragraph{Related works.}

In this study, we focus on statistical hypothesis testing for post-hoc analysis, i.e., quantifying the statistical significance of the salient regions identified in a trained DNN model when a test input instance is fed into the model. 
Several methods have been developed to visualize and understand trained DNNs.
Many of these post-hoc approaches \citep{mahendran2015understanding, zeiler2014visualizing, dosovitskiy2016inverting, simonyan2013deep} have focused on developing visualization tools for saliency maps given a trained DNN. 
Other methods have aimed to identify the discriminative regions in an input image given a trained network \citep{selvaraju2017grad, fong2017interpretable, zhou2016learning, lundberg2017unified}.
Furthermore, some recent studies have shown that many popular methods for explanation and interpretation are not stable against a perturbation or adversarial attack on the input data and model~\citep{kindermans2017reliability, ghorbani2019interpretation, melis2018towards, zhang2020interpretable, dombrowski2019explanations, heo2019fooling}.
However, to the best of our knowledge, no study to date has quantitatively evaluated and reproducibility of DNN-driven salient regions with a rigorous statistical inference framework.

Recently, conditional SI has been recognized as a promising new approach for evaluating the statistical significance of data-driven hypotheses. Conditional SI has been mainly studied for inference of linear model features selected by a feature selection method such as Lasso \citep{lee2016exact, liu2018more,hyun2018exact, le2021parametric} and stepwise feature selection \citep{tibshirani2016exact,sugiyama2020more}. 
The main idea of conditional SI study is to make inferences conditional on selection events, which allows us to derive exact sampling distributions of test statistics. 
In addition, conditional SI has been applied to various problems \citep{fithian2015selective, tian2018selective, yang2016selective, hyun2021post, duy2020computing, sugiyama2021valid, chen2019valid, panigrahi2016bayesian, tsukurimichi2021conditional, hyun2018exact, tanizaki2020computing, duy2021more, tibshirani2016exact, sugiyama2020more, suzumura2017selective, das2021fast, duy2022exact}.

Most relevant existing work of this study is \citet{duy2020quantifying}, where the authors provide a framework for computing valid $p$-values for DNN-based image segmentation results.
In this paper, we generalized this work so that hypotheses characterized by any internal nodes of the network can be considered, enabling us to quanfity the statistical significance of salient regions. 
This is in contrast to \citet{duy2020quantifying}'s work, which only considered the inference of the DNN's output in a segmentation task. 
Furthermore, we introduce a Keras-based implementation framework that enables us to conduct SI for a wide class of CNNs without additional implementation costs. 
This is in contrast to \citet{duy2020quantifying}'s work, where the selection event must be implemented whenever the network architecture is changed.
In another direction, \citet{burns2020interpreting} considered the black box model interpretability as a multiple-hypothesis testing problem.
They aimed to deduce important features by testing the significance of the difference between the model prediction and what would be expected when replacing the features with their counterfactuals.
The difficulty of this multiple-hypothesis testing approach is that the number of hypotheses to be considered is large (e.g., in the case of an image with $n$ pixels, the number of possible salient regions is $2^n$).
Multiple testing correction methods, such as the Bonferroni correction, are highly conservative when the number of hypotheses is large. 
To circumvent this difficulty, they only considered a tractable number of regions selected by a human expert or object detector, which causes selection bias because these candidate regions are selected based on the data.

\paragraph{Contribution.} Our main contributions are as follows:

$\bullet$ We provide an exact (non-asymptotic) inference method for salient regions based on the SI concept. To the best of our knowledge, this is the first method that proposes to provide valid $p$-values to statistically quantify the reliability of DNN-driven salient regions.

$\bullet$ We propose a novel algorithm and its implementation. Specifically, we propose Keras-based implementation enables us to conduct conditional SI for a wide class of CNNs without additional implementation costs.

$\bullet$ We conducted experiments on both synthetic and real-world datasets, through which we show that our proposed method can successfully control the false positive rate, has good performance in terms of computational efficiency, and provides good results in practical applications.
We provide the detailed description of our implementation in the supplementary document. Our code is available at
\begin{center}
\href{https://github.com/takeuchi-lab/selective_inference_dnn_salient_region}{https://github.com/takeuchi-lab/selective\_inference\_dnn\_salient\_region}.
\end{center}

\section{Problem Formulation} \label{sec:problem_setup}
In this paper, we consider the problem of quantifying the statistical significance of the salient regions identified by a trained DNN model when a test input instance is fed into the model.
Consider an $n$-dimensional \emph{query} input vector
\begin{align*} 
 \bm X &= (X_1, ..., X_n)^\top = \bm s + \bm \veps, ~~~ \bm \veps \sim \NN(\bm 0, \sigma^2I_n)
\end{align*}
and an $n$-dimensional \emph{reference} input vector, 
\begin{align*} 
 \bm X^{\rm ref} &= (X^{\rm ref}_1, ..., X^{\rm ref}_n)^\top = \bm s^{\rm ref} + \bm \veps^{\rm ref}, ~~~ \bm \veps^{\rm ref} \sim \NN(\bm 0, \sigma^2I_n),
\end{align*}
where
$\bm s, \bm s^{\rm ref} \in \RR^n$
are the signals
and 
$\bm \veps, \bm \veps^{\rm ref} \in \RR^n$
are the noises
for query and reference input vectors, respectively. 
We assume that the signals, 
$\bm s$ and $\bm s^{\rm ref}$
are unknown,
whereas the distribution of noises 
$\bm \veps$ and $\bm \veps^{\rm ref}$
are known (or can be estimated from external independent data) to follow 
$\NN(\bm 0, \sigma^2 I_n)$,
an $n$-dimensional normal distribution with a mean vector $\bm 0$ and covariance matrix $\sigma^2 I_n$, which are mutually independent.
In the illustrative example presented in \S1,
$\bm X$
is a query brain image for a potential patient (we do not know whether she/he has a brain tumor),
whereas
$\bm X^{\rm ref}$
is a brain image of a healthy person known to be without brain tumors.

Consider a saliency method for a trained CNN.
We denote the saliency method as a function
$\cA: \RR^n \to \RR^n$
that takes a query input vector
$\bm X \in \RR^n$
and returns the saliency map
$\cA(\bm X) \in \RR^n$.
We define a \emph{salient region} $\cM_{\bm X}$ for the query input vector $\bm X$ as the set of elements whose saliency map value is greater than a threshold
\begin{align}
 \label{eq:threshold}
 \cM_{\bm X} = \left \{ i\in [n] : \cA_i(\bm X) \geq \tau \right \},
\end{align}
where $\tau \in \RR$ denotes the given threshold.
In this study, we consider CAM~\citep{zhou2016learning} as an example of saliency method and threshold-based definition of the salient region. Our method can be applied to other saliency methods and other definition of salient region. 

\paragraph{Statistical inference.}
To quantify the statistical significance of the saliency region $\cM_{\bm X}$, we consider such \emph{two-sample test} to quantify the statistical significance of the difference between the salient regions of the query input vector
$\bm X_{\cM_{\bm X}}$
and corresponding region of the reference input vector
$\bm X^{\rm ref}_{\cM_{\bm X}}$.
As concrete examples of the two-sample test, we consider the \emph{mean null test}:
\begin{align}
 \label{eq:mean_null}
 {\rm H}_{0} : 
 \frac{1}{|\cM_{\bm X}|}\sum \limits_{i \in \cM_{\bm X}} s_i = \frac{1}{|\cM_{\bm X}|} \sum \limits_{i \in \cM_{\bm X}} s_i^{\rm ref}
 \quad 
 \text{v.s.}
 \quad  
 {\rm H}_{1} : 
 \frac{1}{|\cM_{\bm X}|} \sum \limits_{i \in \cM_{\bm X}} s_i \neq \frac{1}{|\cM_{\bm X}|} \sum \limits_{i \in \cM_{\bm X}} s_i^{\rm ref}.
\end{align}
and \emph{global null test}: 
\begin{align}
 \label{eq:global_null}
 {\rm H}_{0} : s_i = s_i^{\rm ref},  ~ \forall i \in \cM_{\bm X},
 \quad 
 \text{v.s.}
 \quad  
 {\rm H}_{1} : s_i \neq s_i^{\rm ref},  ~ \exists i \in \cM_{\bm X},
\end{align}
In the mean null test depicted in Eq. \eq{eq:mean_null}, we consider a null hypothesis that the average signals in the salient region $\cM_{\bm X}$ are the same between $\bm X$ and $\bm X^{\rm ref}$.
In contrast, in the global null test in Eq. \eq{eq:global_null}, we consider a null hypothesis that all elements of the signals in the salient region $\cM_{\bm X}$ are the same between $\bm X$ and $\bm X^{\rm ref}$.
The $p$-values for these two-sample tests can be used to quantify the statistical significance of the salient region $\cM_{\bm X}$.

\paragraph{Test-statistic.}
For a two-sample test conducted between
$\bm X_{\cM_{\bm X}}$ and $\bm X^{\rm ref}_{\cM_{\bm X}}$, we consider a class of test statistics called 
\emph{conditionally linear test-statistic}, which is expressed as 
\begin{align}
 \label{eq:conditoinally_linear_statistic}
 T(\bm X, \bm X^{\rm ref}) = \bm \eta^\top_{\cM_{\bm X}} {\bm X \choose \bm X^{\rm ref} }, ~ 
\end{align}
and \emph{conditionally $\chi$ test-statistic},
which is expressed as 
\begin{align}
    \label{eq:conditionally_chi_satistic}
    T(\bm X, \bm X^{\rm ref}) =  \sigma^{-1}\left\lVert P _{\cM_{\bm X}} {\bm X \choose \bm X^{\rm ref} }\right\rVert,
\end{align}
where
$\bm \eta_{\cM_{\bm X}} \in \RR^{2n}$ is a vector and $P _{\cM_{\bm X}} \in \RR^{2n\times2n}$
is a projection matrix that depends on saliency region $\cM_{\bm X}$.



The test statistics for the mean null tests and the global null test can be written in the form of Eq. \eq{eq:conditoinally_linear_statistic} and \eq{eq:conditionally_chi_satistic}, respectivery.
For the mean null test in Eq. \eq{eq:mean_null}, we consider the following test-statistic
\begin{align*} 
 T(\bm X, \bm X^{\rm ref}) 
 = \bm \eta^{\top}_{\cM_{\bm X}} { \bm X \choose \bm X^{\rm ref}}
 = \frac{1}{|\cM_{\bm X}|} \sum_{i \in \cM_{\bm X}}
 X_i
 -  
 \frac{1}{|\cM_{\bm X}|} \sum_{i \in \cM_{\bm X}}
 X^{\rm ref}_i,
\end{align*}
where 
$
\bm \eta_{\cM_{\bm X}}
= \frac{1} {|\cM_{\bm X}|} 
\begin{pmatrix}
 \mathbf{1}^n_{\cM_{\bm X}} \\ 
 - \mathbf{1}^n_{\cM_{\bm X}} 
\end{pmatrix}
\in \RR^{2n}.
$
For the gloabl null test in Eq. \eq{eq:global_null}, we consider the following test-statistic
\begin{align*} 
    T(\bm X, \bm X^{\rm ref})
    = \sigma^{-1} \left\lVert P_{\cM_{\bm X}}{\bm X \choose \bm X^{\rm ref}} \right\rVert
 =  \sqrt{
 \sum \limits_{i \in \cM_{\bm X}}
 \left ( \frac{X_i - X_i^{\rm ref}}{\sqrt{2}\sigma}  \right )^2 
 } ,
\end{align*}
where 
\begin{align}
P_{\cM_{\bm X}} 
= \frac{1}{2}
\left(
    \begin{matrix}
    {\rm diag}(\bm 1^n_{\cM_{\bm X}}) & -{\rm diag}(\bm 1^n_{\cM_{\bm X}})\\
    -{\rm diag}(\bm 1^n_{\cM_{\bm X}}) & {\rm diag}(\bm 1^n_{\cM_{\bm X}})
    \end{matrix}
\right).
\end{align}  

To obtain $p$-values for these two-sample tests we need to know the sampling distribution of the test-statistics.
Unfortunately, it is challenging to derive the sampling distributions of test-statistics because they depend on the salient region $\cM_{\bm X}$, which is obtained through a complicated calculation in the trained CNN.

\section{Computing Valid $p$-value by Conditional Selective Inference} \label{sec:proposed_method}

In this section, we introduce an approach to compute the valid $p$-values for the two-sample tests for the salient region $\cM_{\bm X}$ between the query input vector $\bm X$ and the reference input vector $\bm X^{\rm ref}$ based on the concept of conditional SI \citep{lee2016exact}.

\subsection{Conditional Distribution and Selective $p$-value}

\paragraph{Conditional distribution.}
The basic idea of conditional SI is to consider the sampling distribution of the test-statistic conditional on a \emph{selection event}. 
Specifically, we consider the sampling property of the following conditional distribution
\begin{align} \label{eq:conditional_inference}
    T(\bm{X},\bm{X}^{\text{ref}})
 ~ \Big | ~
 \left \{ \cM_{\bm X} = \cM_{\bm X_{\rm obs}} \right \}, 
\end{align}
where
$\bm X_{\rm obs}$ 
is the observation (realization) of random vector $\bm X$.
The condition in Eq.\eq{eq:conditional_inference} indicates the randomness of $\bm X$ conditional on the event that the same salient region $\cM_{\bm X}$ as the observed $\cM_{\bm X^{\rm obs}}$ is obtained.
By conditioning on the salient region $\cM_{\bm X}$, derivation of the sampling distribution of the conditionally linear and $\chi$ test-statistic
$T(\bm X, \bm X^{\rm ref})$
is reduced to a derivation of the distribution of linear function and quadratic function of
$(\bm X, \bm X^{\rm ref})$, respectively.

\paragraph{Selective $p$-value.}
After considering the conditional sampling distribution in \eq{eq:conditional_inference}, we introduce the following \emph{selective $p$-value}:
\begin{align} \label{eq:selective_p}
	p_{\rm selective} 
	= \mathbb{P}_{\rm H_0} 
	\Big ( 
	\left |T(\bm X, \bm X^{\rm ref}) \right | 
	\geq 
	\left |T(\bm X_{\rm obs}, \bm X^{\rm ref}_{\rm obs}) \right | 
	~ \Big | ~
	\cM_{\bm X} = \cM_{\bm X_{\rm obs}}, ~
	\cQ_{\bm X, \bm X^{\rm ref}} = \cQ_{\rm obs}
	\Big ),
\end{align}
where 
\begin{align*} 
 \cQ_{\bm X, \bm X^{\rm ref}} 
 = \Omega_{\bm X, \bm X^{\rm ref}}, 
 \quad 
 \cQ_{\rm obs} = \cQ_{\bm X_{\rm obs}, \bm X^{\rm ref}_{\rm obs}}
\end{align*}
with 
\begin{align*}
     \Omega_{\bm X, \bm X^{\rm ref}}=
     \left (
     I_{2n} - 
     \frac{
     \bm \eta_{\cM_{\bm X}}  \bm \eta_{\cM_{\bm X}}^\top
     } { 
     \lVert \bm{\eta}_{\cM_{\bm X}}  \rVert^2
     } 
     \right ) 
     { \bm X \choose \bm X^{\rm ref}} \in \RR^{2n}
\end{align*}
in the case of mean null test, and
\begin{align*}
 \cQ_{\bm X, \bm X^{\rm ref}} 
 = 
 \left\{
     \mathcal{V}_{\bm X, \bm X^{\rm ref}}
     ,\;
     \mathcal{U}_{\bm X, \bm X^{\rm ref}}
 \right\},
 \quad 
 \cQ_{\rm obs} = \cQ_{\bm X_{\rm obs}, \bm X^{\rm ref}_{\rm obs}}
 \quad
\end{align*}
with 
\begin{align*}
    \mathcal{V}_{\bm X, \bm X^{\rm ref}} = 
    \sigma P_{\cM_{\bm X}} { \bm X \choose \bm X^{\rm ref}} \Big 
    / {\left\lVert P_{\cM_{\bm X}} { \bm X \choose \bm X^{\rm ref}} \right\rVert} \in \RR^{2n}
    ,\quad
    \mathcal{U}_{\bm X, \bm X^{\rm ref}} =
    P_{\cM_{\bm X}}^\perp { \bm X \choose \bm X^{\rm ref}}\in \RR^{2n}
\end{align*}
in the case of global null test.
The $\cQ_{\bm X, \bm X^{\rm ref}}$ is the sufficient statistic of the nuisance parameter that needs to be conditioned on in order to tractably conduct the inference~\footnote{
This nuisance parameter $\cQ_{\bm X, \bm X^{\rm ref}}$ corresponds to the component $\bm z$ in the seminal conditional SI paper \citep{lee2016exact} (see Sec. 5, Eq. 5.2 and Theorem 5.2) and $\bm{z},\bm{w}$ in \citep{chen2019valid}(see Sec. 3, Theorem 3.7).
We note that additional conditioning on $\cQ_{\bm X, \bm X^{\rm ref}}$ is a standard approach in the conditional SI literature and is used in almost all the conditional SI-related studies.
Here, we would like to note that the selective $p$-value depend on $\cQ_{\bm X, \bm X^{\rm ref}}$, but the property in \eq{eq:sampling_property_selective_p} is satisfied without this additional condition because we can marginalize over all values of $\cQ_{\bm X, \bm X^{\rm ref}}$ (see the lower part of the proof of Theorem 5.2 in \citet{lee2016exact} and the proof of Theorem 3.7 in \citet{chen2019valid} ).
}.

The selective $p$-value in Eq.\eq{eq:selective_p} has the following desired sampling property 
\begin{align} \label{eq:sampling_property_selective_p}
	\PP_{{\rm H}_0} \Big (p_{\rm selective} \leq \alpha 
	\mid
    \cM_{\bm X} = \cM_{\bm X_{\rm obs}}
	\Big ) = \alpha, \quad \forall \alpha \in [0, 1].
\end{align}
This means that the selective $p$-values $p_{\rm selective}$ can be used as a valid statistical significance measure for the salient region $\cM_{\bm X}$. 

\subsection{Characterization of the Conditional Data Space}
To compute the selective $p$-value in \eq{eq:selective_p}, we need to characterize the conditional data space whose characterization is described introduced in the next section.
We define the set of $(\bm X ~ \bm X^{\rm ref})^\top \in \RR^{2n}$ that satisfies the conditions in Eq. \eq{eq:selective_p} as 
\begin{align} \label{eq:conditional_data_space}
	\cD = 
	\left \{ 
		(\bm X ~ \bm X^{\rm ref})^\top \in \RR^{2n}
		 ~ \big |  ~ 
        \cM_{\bm X} = \cM_{\bm X_{\rm obs}},
		\cQ_{\bm X, \bm X^{\rm ref}} = \cQ_{\rm obs}
	\right \}. 
\end{align}
According to the second condition, the data in $\cD$ is restricted to a line in $\RR^{2n}$ as stated in the following Lemma.
\begin{lemma} \label{lemma:data_line}
Let us define
let us define,
\begin{align} \label{eq:a_b_line}
	\bm a = 
    \Omega_{\bm X_{\rm obs}, \bm X^{\rm ref}_{\rm obs}}
	\quad \text{and} \quad 
    \bm b = \frac{\bm \eta_{\cM_{\bm X}}} {\lVert {\bm \eta_{\cM_{\bm X}}} \rVert^2} \in \RR^{2n}.
\end{align}
in the mean null test, and
\begin{align} \label{eq:a_b_line}
    \bm a = \mathcal{U}_{\bm X_{\rm obs}, \bm X^{\rm ref}_{\rm obs}}
	\quad \text{and} \quad 
    \bm b = \mathcal{V}_{\bm X_{\rm obs}, \bm X^{\rm ref}_{\rm obs}}
\end{align}
in the case of global null test.
Then, the set $\cD$ in \eq{eq:conditional_data_space} can be rewritten as 
$
\cD = \Big \{ \big (\bm X ~ \bm X^{\rm ref} \big )^\top = \bm a + \bm b z \mid z \in \cZ \Big \}
$
by using the scalar parameter $z \in \RR$, where
\begin{align} \label{eq:cZ}
	\cZ = \left \{ 
	z \in \RR ~
	\mid  
    \cM_{\bm a_{1:n} + \bm b_{1:n}} z = \cM_{\bm X_{\rm obs}}
	\right \}.
\end{align}

$\bm{x}_{1:n}$ represents a vector of elements $1$ through $n$ of $\bm x$.
\end{lemma}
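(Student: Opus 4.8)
The plan is to treat the two conditions defining $\cD$ in \eq{eq:conditional_data_space} separately: the nuisance condition $\cQ_{\bm X,\bm X^{\rm ref}} = \cQ_{\rm obs}$ pins the stacked data vector to a one-dimensional affine set (a line), while the selection condition $\cM_{\bm X} = \cM_{\bm X_{\rm obs}}$ subsequently carves out the admissible range $\cZ$ of the line parameter. The key preliminary observation is that once we condition on $\cM_{\bm X} = \cM_{\bm X_{\rm obs}}$, the objects $\bm\eta_{\cM_{\bm X}}$ and $P_{\cM_{\bm X}}$ are no longer random: they equal $\bm\eta_{\cM_{\bm X_{\rm obs}}}$ and $P_{\cM_{\bm X_{\rm obs}}}$ and may be treated as fixed, known quantities throughout. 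Write $\bm v = \binom{\bm X}{\bm X^{\rm ref}}$ and $\bm v_{\rm obs} = \binom{\bm X_{\rm obs}}{\bm X^{\rm ref}_{\rm obs}}$.

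For the mean null test I would exploit the orthogonal decomposition of $\bm v$ into its component along $\bm\eta_{\cM_{\bm X}}$ and its component in the orthogonal complement, $\bm v = \frac{\bm\eta_{\cM_{\bm X}}\bm\eta_{\cM_{\bm X}}^\top}{\|\bm\eta_{\cM_{\bm X}}\|^2}\bm v + \Big(I_{2n} - \frac{\bm\eta_{\cM_{\bm X}}\bm\eta_{\cM_{\bm X}}^\top}{\|\bm\eta_{\cM_{\bm X}}\|^2}\Big)\bm v$. The second summand is exactly $\Omega_{\bm X,\bm X^{\rm ref}}$, so the nuisance condition forces it to equal $\Omega_{\bm X_{\rm obs},\bm X^{\rm ref}_{\rm obs}} = \bm a$. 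The first summand equals $\frac{\bm\eta_{\cM_{\bm X}}}{\|\bm\eta_{\cM_{\bm X}}\|^2}\big(\bm\eta_{\cM_{\bm X}}^\top\bm v\big) = \bm b\,T(\bm X,\bm X^{\rm ref})$. Setting $z = T(\bm X,\bm X^{\rm ref})$ then yields $\bm v = \bm a + \bm b z$, the claimed parameterization with $\bm a$ and $\bm b$ as defined in the lemma.

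For the global null test I would first verify that $P_{\cM_{\bm X}}$ is an orthogonal projection (it is symmetric and idempotent, since ${\rm diag}(\bm 1^n_{\cM_{\bm X}})$ is itself a $0/1$ diagonal projection), so that $\bm v = P_{\cM_{\bm X}}\bm v + P_{\cM_{\bm X}}^\perp\bm v$ is an orthogonal splitting. The condition $\mathcal{U}_{\bm X,\bm X^{\rm ref}} = \mathcal{U}_{\bm X_{\rm obs},\bm X^{\rm ref}_{\rm obs}}$ fixes the complementary component $P_{\cM_{\bm X}}^\perp\bm v = \bm a$, while the condition on $\mathcal{V}_{\bm X,\bm X^{\rm ref}}$ fixes the \emph{direction} of $P_{\cM_{\bm X}}\bm v$. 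Parameterizing the single remaining scalar degree of freedom by the test statistic $z = T(\bm X,\bm X^{\rm ref}) = \sigma^{-1}\|P_{\cM_{\bm X}}\bm v\|\ge 0$ and substituting the fixed direction recovers $P_{\cM_{\bm X}}\bm v = z\,\bm b$ (using $\bm b = \mathcal{V}_{\bm X_{\rm obs},\bm X^{\rm ref}_{\rm obs}}$), whence again $\bm v = \bm a + \bm b z$. The step requiring the most care is this last one: one must combine the two pieces of $\cQ_{\bm X,\bm X^{\rm ref}}$ correctly and recognize that the magnitude $\|P_{\cM_{\bm X}}\bm v\|$, equivalently $T$, is precisely the free coordinate, noting also that it is constrained to be nonnegative.

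Finally I would reimpose the selection condition. Since the salient region $\cM_{\bm X}$ is a function of the query vector alone, and the query vector is the first block $\bm X = \bm v_{1:n} = \bm a_{1:n} + \bm b_{1:n} z$, the event $\cM_{\bm X} = \cM_{\bm X_{\rm obs}}$ is equivalent to $\cM_{\bm a_{1:n} + \bm b_{1:n} z} = \cM_{\bm X_{\rm obs}}$, i.e. to $z \in \cZ$ as defined in \eq{eq:cZ}. Combining this restriction with the line representation obtained above gives $\cD = \big\{ \bm a + \bm b z \mid z \in \cZ \big\}$, which completes the argument.
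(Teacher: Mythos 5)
Your proof is correct and follows essentially the same route as the paper's: both pin down the component of $(\bm X ~ \bm X^{\rm ref})^\top$ fixed by the nuisance conditioning (via $\Omega$ in the linear case, $\mathcal{U}$ and the direction $\mathcal{V}$ in the $\chi$ case) and identify the remaining scalar degree of freedom with the test statistic $z$ — your orthogonal-decomposition phrasing is just the paper's algebraic rearrangement read in reverse. If anything, your write-up is slightly more complete, since you explicitly verify that $P_{\cM_{\bm X}}$ is an orthogonal projection, note the nonnegativity constraint on $z$ in the $\chi$ case, and carry out the final step converting the selection event into the set $\cZ$, which the paper's proof leaves implicit.
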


\begin{proof}
The proof is deferred to Appendix \ref{app:proof_lemma_1}
\end{proof}

Lemma \ref{lemma:data_line} indicates that we do not need to consider the $2n$-dimensional data space.
Instead, we only need to consider the \emph{one-dimensional projected} data space $\cZ$ in \eq{eq:cZ}.
Now, let us consider a random variable $Z \in \RR$ and its observation $Z_{\rm obs} \in \RR$ that satisfies $(\bm X ~ \bm X^{\rm ref})^\top = \bm a + \bm b Z$ and $(\bm X_{\rm obs} ~ \bm X^{\rm ref}_{\rm obs})^\top = \bm a + \bm b Z_{\rm obs}$.
The selective $p$-value (\ref{eq:selective_p}) is rewritten as 
\begin{align} \label{eq:selective_p_parametrized}
	p_{\rm selective} 
	 = \mathbb{P}_{\rm H_0} \left ( |Z| \geq |Z_{\rm obs}| 
	\mid 
	Z \in \cZ
	\right).
\end{align}
Because the variable $Z \sim \NN(0, \sigma^2\lVert \bm{\eta} \rVert^2 )$ in the case of mean null test and $Z \sim \chi\left( \mathrm{Trace}(P) \right) $ in the case of global null test under the null hypothesis, $Z \mid Z \in \cZ$ follows a \emph{truncated} normal distribution and a \emph{truncated} $\chi$ distribution, respectively.
Once the truncation region $\cZ$ is identified, computation of the selective $p$-value in (\ref{eq:selective_p_parametrized}) is straightforward.
Therefore, the remaining task is to identify $\cZ$.

In general, computation of $\cZ$ in \eq{eq:cZ} is difficult because we need to identify the selection event $\cM_{\bm a_{1:n} + \bm b_{1:n} z}$ for all values of $z \in \RR$, which is computationally challenging.
In the next section, we show that the challenge can be resolved under a wide class of problems.

\section{Piecewise Linear Network} \label{sec:class_problem}
The problem of computing selective $p$-values for the selected salient region is casted into the problem of identifying a set of intervals $\cZ = \{z \in \RR \mid \cM_{\bm X (z)} = \cM_{\bm{X}_{\rm{obs}}}\}$.
Given the complexity of saliency computation in a trained DNN, it seems difficult to obtain $\cZ$.
In this section, however, we explain that this is feasible for a wide class of CNNs.

\paragraph{Piecewise linear components in CNN}
The key idea is to note that most of basic operations and common activation functions used in a trained CNN can be represented as piecewise linear functions in the following form:
\begin{definition} \label{def:piecewise_linear}
(Piecewise Linear Function) A piecewise linear function $f : \RR^{n} \mapsto \RR^{m}$ is defined as:
\begin{align*} 
	f(\bm X) = 
	\begin{cases}
		\Psi_1^f \bm X + \bm \psi_1^f, & \text{if} ~~  \bm X \in \cP^f_1 := \{\bm X^\prime \in \RR^n \mid \Delta_1^f \bm X^\prime \le \bm \delta_1^f\},\\ 
		\Psi_2^f \bm X + \bm \psi_2^f, & \text{if} ~~  \bm X \in \cP^f_2 := \{\bm X^\prime \in \RR^n \mid \Delta_2^f \bm X^\prime \le \bm \delta_2^f\},\\ 
		\quad \quad \quad  \vdots \\ 
		\Psi_{K(f)}^f \bm X + \bm \psi_{K(f)}^f, & \text{if} ~~  \bm X \in \cP^f_{K(f)} := \{\bm X^\prime \in \RR^n \mid \Delta_{K(f)}^f \bm X^\prime \le \bm \delta_{K(f)}^f\}
	\end{cases}
\end{align*}
where $\Psi_k^f$, $\bm \psi_k^f$, $\Delta_k^f$ and $\bm \delta_k^f$ for $k \in [K(f)]$ are certain matrices and vectors with appropriate dimensions, $\cP_k^f := \{\bm x \in \RR^n \mid \Delta_k^f \bm x \le \bm \delta_k^f\}$ is a polytope in $\RR^n$ for $k \in [K(f)]$, and $K(f)$ is the number of polytopes for the function $f$.
\end{definition}

Examples of piecewise linear components in a trained CNN are shown in Appendix \ref{app:example_piecewise_linear}.

\paragraph{Piecewise Linear Network}
\begin{definition} (Piecewise Linear Network)
 A network obtained by concatenations and compositions of piecewise linear functions is called \emph{piecewise linear network}.
\end{definition}
Since the concatenation and the composition of piecewise linear functions is clearly piecewise linear function, the output of any node in the piecewise linear network is written as a piecewise linear function of an input vector $\bm X$. 
This is also true for the saliency map function $\cA_i(\bm X), i \in [n]$. 
Furthermore, as discussed in \S4, we can focus on the input vector in the form of $\bm X(z) = \bm a_{1:n} + \bm b_{1:n} z$ which is parametrized by a scalar parameter $z \in \RR$.
Therefore, the saliency map value for each element is written as a piecewise linear function of the scalar parameter $z$, i.e., 
\begin{align} \label{eq:A_i_function_z}
 \cA_i(\bm X(z)) = 
 \begin{cases}
  \kappa_1^{\cA_i} z + \rho_1^{\cA_i}, & \text{if} ~~  z \in [L_1^{\cA_i}, U_1^{\cA_i}], \\ 
  \kappa_2^{\cA_i} z + \rho_2^{\cA_i}, & \text{if} ~~  z \in [L_2^{\cA_i}, U_2^{\cA_i}], \\ 
  \quad \quad \quad  \vdots \\ 
  \kappa_{K(\cA_i)}^{\cA_i} z + \rho_{K(\cA_i)}^f, & \text{if} ~~ z \in [L_{K(\cA_i)}^{\cA_i}, U_{K(\cA_i)}^{\cA_i}],
 \end{cases}
\end{align}
where
$K(\cA_i)$
is the number of linear pieces of the piecewise linear function, 
$\kappa_k^{\cA_i}, \rho_k^{\cA_i}$
are certain scalar parameters,
$[L_k^{\cA_i}, U_k^{\cA_i}]$
are intervals
for
$k \in [K(\cA_i)]$
(note that a polytope in $\RR^n$ is reduced to an interval when it is projected onto one-dimensional space).

This means that, for each piece of the piecewise linear function, we can identify the interval of $z$ such that $\cA_i(\bm X(z)) \ge \tau$ as follows~\footnote{For simplicity, we omit the description for the case of $\kappa_k^{\cA_i}=0$. In this case, if $\rho_k^{\cA_i} \ge \tau$, then $z \in [L_k^{\cA_i}, U_k^{\cA_i}] \Rightarrow i \in \cM_{\bm X(z)}$.}
\begin{align}
 \label{eq:z_interval}
 z \in
 \mycase{
 \left[
 \max\left(
 L_k^{\cA_i}, \left(\tau - \rho_k^{\cA_i} \right)/\kappa_k^{\cA_i}
 \right), \;
 U_k^{\cA_i}
 \right] & \text{ if } \kappa_k^{\cA_i} > 0
 \\
 \left[
 L_k^{\cA_i},
 \min\left(
 U_k^{\cA_i}, \left(\tau - \rho_k^{\cA_i} \right)/\kappa_k^{\cA_i}
 \right),
 \right] & \text{ if } \kappa_k^{\cA_i} < 0
 }
 \quad
 \Rightarrow
 \quad
 \cA_i(\bm X(z)) \ge \tau.
\end{align}
With a slight abuse of notation, let us collectively denote the finite number of intervals on $z \in \RR$ that are defined by $L_k^{\cA_i}, U_k^{\cA_i}, (\tau-\rho_i^{\cA_i}/\kappa_k^{\cA_i})$ for all $(k, i) \in [K(\cA_i)] \times[n]$ as 
\begin{align*}
 [z_0, z_1], [z_1, z_2], \ldots, [z_{t-1}, z_t], [z_t, z_{t+1}], \ldots, [z_{T-1}, z_T], 
\end{align*}
where
$z_{\rm min} = z_0$
and 
$z_{\rm max} = z_T$
are defined such that the probability mass of
$z < z_{\rm min}$
and 
$z > z_{\rm max}$
are negligibly small. 

\paragraph{Algorithm}
Algorithm~\ref{alg:si_dnn_saliency} shows how we identify
$\cZ = \{z \in \RR \mid \cM_{\bm X(z), \bm X^{\rm ref}(z)} = \cM_{\rm obs}\}$.
We simply check the intervals of $z$ in the order of
$[z_0, z_1], [z_1, z_2], ... , [z_{T-1}, z_T]$
to see whether 
$\cM_{\bm X(z)} = \cM_{\bm X(z_{\rm obs})}$
or not
in the interval 
by using Eq.\eq{eq:z_interval}.
Then, the truncation region $\cZ$ in Eq.\eq{eq:cZ} is given as
$
 \cZ = \bigcup_{t \in [T] \mid \cE_{\bm X(z), \bm X^{\rm ref}(z)}  = \cE_{\rm obs} \text{ for } z \in [z_t, z_{t+1}]} [z_t, z_{t+1}]. 
$

\begin{algorithm}[!t]
\renewcommand{\algorithmicrequire}{\textbf{Input:}}
\renewcommand{\algorithmicensure}{\textbf{Output:}}
\begin{scriptsize}
 \begin{algorithmic}[1]
  \REQUIRE ${\bm X}^{\rm obs}, z_{\rm min}, z_{\rm max}$, $\cT \leftarrow \emptyset$
  \vspace{2pt}
  \STATE Obtain $\cE_{\rm obs}$, compute $\bm \eta$ as well as $\bm a$ and $\bm b$  $\leftarrow$ Eq. (\ref{eq:a_b_line}), and initialize: $t = 1$, $z_t=z_{\rm min}$	\vspace{2pt}
  \FOR {$t \le T$}
  \vspace{2pt}
  \vspace{2pt}
  \STATE Compute $z_{t+1}$ by Auto-Conditioning (see \S\ref{sec:implementation})
  \vspace{3pt}
  \IF{$\cE_{\bm X(z), \bm X^{\rm ref}(z)} = \cE_{\rm obs}$ in $z \in [z_t, z_{t+1}]$ (by using Eq.\eq{eq:z_interval})}
  \STATE $\cT \leftarrow \cT + \{t\}$
  \ENDIF
  \vspace{3pt}
  \STATE $t = t + 1$
  \ENDFOR
  \vspace{2pt}
  \STATE Identify $\cZ \leftarrow \bigcup_{t \in \cT} [z_t, z_{t+1}]$
  \vspace{2pt}
  \STATE $p_{\rm selective} \leftarrow$ Eq. (\ref{eq:selective_p_parametrized})
  \vspace{2pt}
  \ENSURE $p_{\rm selective}$ 
 \end{algorithmic}
\end{scriptsize}
\caption{{\tt SI\_DNN\_Saliency}}
\label{alg:si_dnn_saliency}
\end{algorithm}

\section{Implementation: Auto-Conditioning} \label{sec:implementation}
%
The bottleneck of our algorithm is Line 3 in Algorithm~\ref{alg:si_dnn_saliency}, where $z_{t+1}$ must be found by considering all relevant piecewise linear components in a complicated trained CNN. 
The difficulty lies not only in the computational cost but also in the implementation cost. To implement conditional SI in DNNs naively, it is necessary to characterize all operations at each layer of the network as selection events and implement each of the specifically\citep{duy2020quantifying}
To circumvent this difficulty, we introduce a modular implementation scheme called \emph{auto-conditioning}, which is similar to \emph{auto-differentiation}~\citep{baydin2018automatic} in concept. 
This enables us to conduct conditional SI for a wide class of CNNs without additional implementation cost.

The basic idea in auto-conditioning is to add a mechanism to compute and maintain the interval $z \in [L^f_k, U^f_k]$ 
for each piecewise linear component $f$ in the network (e.g., layer API in the Keras framework).
This enables us to automatically compute the interval $[L^f_k, U^f_k]$ of a piecewise linear function $f$ when it is obtained as concatenation and/or composition of multiple piecewise linear components.
If $f$ is obtained by concatenating two piecewise linear functions $f_1$ and $f_2$, we can easily obtain $[L^f_k, U^f_k] = [L^{f_1}_{k_1}, U^{f_1}_{k_1}] \cap [L^{f_2}_{k_2}, U^{f_2}_{k_2}]$.
However, if $f$ is obtained as a composition of two piecewise linear functions $f_1$ and $f_2$, the calculation of the interval is given by the following lemma.
\begin{lemma} \label{lemm:composition}
 Consider the composition of two piecewise linear functions, that is, $f(\bm X(z)) = (f_2 \circ f_1) (\bm X(z))$.
 Given a real value of $z$, the interval $[L_k^{f_2}, U_k^{f_2}]$ in the input domain of $f_2$
 can be computed as 
 \begin{align*}
  L^{f_2}_{k_2}  
  & = 
  \max \limits_{j : (\Delta^{f_2}_{k_2} \bm \gamma^{f_1})_j < 0}
  \frac{
  (\bm \delta^{f_2}_{k_2})_j - 
  (\Delta^{f_2}_{k_2} \bm \beta^{f_1})_j
  }{
  (\Delta^{f_2}_{k_2} \bm \gamma^{f_1})_j
  },
  \quad \quad  
  U^{f_2}_{k_2} 
  =
  \min \limits_{j : (\Delta^{f_2}_{k_2} \bm \gamma^{f_1})_j > 0}
  \frac{
  (\bm \delta^{f_2}_{k_2})_j - 
  (\Delta^{f_2}_{k_2} \bm \beta^{f_1})_j
  }{
  (\Delta^{f_2}_{k_2} \bm \gamma^{f_1})_j
  }, 
 \end{align*}
 where 
 $\bm \beta^{f_1} +  \bm \gamma^{f_1} z$ is the output of $f_1$ (i.e., the input of $f_2$). Moreover, $\Delta^{f_2}_{k_2}$ and $\bm \delta^{f_2}_{k_2}$ are obtained by verifying the value of $\bm \beta^{f_1} +  \bm \gamma^{f_1} z$. 
 Then, 
 the interval of the composite function is obtained as follows:
 $[L^f_k, U^f_k] = [L^{f_1}_{k_1}, U^{f_1}_{k_1}] \cap [L^{f_2}_{k_2}, U^{f_2}_{k_2}]$
\end{lemma}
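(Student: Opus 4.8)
The plan is to reduce the problem, exactly as in the proof of Lemma~\ref{lemma:data_line}, to intersecting a parametrized line with a polytope. First I would fix the real value $z$ appearing in the statement and let $k_1$ be the index of the piece of $f_1$ that is active at $\bm X(z)$, i.e. $\bm X(z) \in \cP^{f_1}_{k_1}$ in the notation of Definition~\ref{def:piecewise_linear}. On this piece $f_1$ acts affinely, so restricting to the line $\bm X(z) = \bm a_{1:n} + \bm b_{1:n} z$ gives $f_1(\bm X(z)) = \Psi^{f_1}_{k_1}(\bm a_{1:n} + \bm b_{1:n} z) + \bm \psi^{f_1}_{k_1}$, which is affine in the scalar $z$. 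Collecting the constant and linear parts defines $\bm \beta^{f_1} = \Psi^{f_1}_{k_1}\bm a_{1:n} + \bm \psi^{f_1}_{k_1}$ and $\bm \gamma^{f_1} = \Psi^{f_1}_{k_1}\bm b_{1:n}$, so the input to $f_2$ is $\bm \beta^{f_1} + \bm \gamma^{f_1} z$. Evaluating which piece of $f_2$ contains this point for the given $z$ selects the index $k_2$, and hence the matrix $\Delta^{f_2}_{k_2}$ and vector $\bm \delta^{f_2}_{k_2}$, as asserted in the statement.

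Next I would characterize the set of $z$ for which the intermediate point stays inside the same polytope $\cP^{f_2}_{k_2}$, i.e. for which $\Delta^{f_2}_{k_2}(\bm \beta^{f_1} + \bm \gamma^{f_1} z) \le \bm \delta^{f_2}_{k_2}$. Reading this inequality row by row turns it into a finite family of scalar linear inequalities $(\Delta^{f_2}_{k_2}\bm \gamma^{f_1})_j\, z \le (\bm \delta^{f_2}_{k_2})_j - (\Delta^{f_2}_{k_2}\bm \beta^{f_1})_j$ in the single variable $z$. I would then solve each by a case analysis on the sign of the coefficient $(\Delta^{f_2}_{k_2}\bm \gamma^{f_1})_j$: a positive coefficient yields an upper bound on $z$, a negative coefficient yields a lower bound (the inequality flips), and a zero coefficient yields a constraint not involving $z$, which is automatically satisfied because the chosen $z$ itself lies in $\cP^{f_2}_{k_2}$. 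Taking the tightest bounds — the minimum of the upper bounds over rows with positive coefficient and the maximum of the lower bounds over rows with negative coefficient — reproduces exactly the formulas for $U^{f_2}_{k_2}$ and $L^{f_2}_{k_2}$ in the statement, and shows that the feasible set is the interval $[L^{f_2}_{k_2}, U^{f_2}_{k_2}]$.

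Finally, the composite $f_2 \circ f_1$ is affine in $z$ precisely when both $f_1$ stays on piece $k_1$ and $f_2$ stays on piece $k_2$; the former holds on $[L^{f_1}_{k_1}, U^{f_1}_{k_1}]$ (again a one-dimensional slice of a polytope, by the same argument) and the latter on $[L^{f_2}_{k_2}, U^{f_2}_{k_2}]$, so the interval for the composite is their intersection $[L^{f_1}_{k_1}, U^{f_1}_{k_1}] \cap [L^{f_2}_{k_2}, U^{f_2}_{k_2}]$, as claimed. I do not expect a genuine obstacle here: the computation is the same linear-programming-in-one-variable step already used for Lemma~\ref{lemma:data_line}. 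The only points requiring care are bookkeeping: the piece index $k_2$ is itself determined by $z$, so the formulas describe the interval only for the piece active at the given $z$, and one must confirm that the degenerate rows with $(\Delta^{f_2}_{k_2}\bm \gamma^{f_1})_j = 0$ are correctly excluded from both the max and the min, consistent with their omission in the stated formulas.
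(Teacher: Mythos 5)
Your proposal is correct and follows essentially the same route as the paper's proof: propagate the affine parametrization $\bm \beta + \bm \gamma z$ forward through the active piece of $f_1$, identify the active polytope of $f_2$ at the given $z$, and solve the resulting one-variable linear inequalities by sign analysis to obtain $[L^{f_2}_{k_2}, U^{f_2}_{k_2}]$, intersecting with the first layer's interval. If anything, your write-up is slightly more careful than the paper's, since you explicitly justify discarding the rows with $(\Delta^{f_2}_{k_2}\bm \gamma^{f_1})_j = 0$ and note that the piece index $k_2$ is itself determined by the chosen $z$.
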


The proof is provided in Appendix \ref{app:proof_lemma_composition}. 
Here, the variables $\bm \beta^{f_k}$ and $\bm \gamma^{f_k}$ can be recursively computed through layers as
\begin{align*}
 \bm \beta^{f_{k+1}} = \Psi_k^{f_k} \bm \beta^{f_k} + \bm \psi_k^{f_k}
 ~~~ \text{and} ~~~
 \bm \gamma^{f_{k+1}} = \Psi_k^{f_k} \bm \gamma^{f_k}. 
\end{align*}
Lemma~\ref{lemm:composition} indicates that the intervals in which $\bm X(z)$ decreases can be \emph{forwardly propagated} through these layers. 
This means that the lower bound $L_k^{\cA_i}$ and upper bound $U_k^{\cA_i}$ of the current piece in the piecewise linear function in Eq. \eq{eq:A_i_function_z} can be automatically computed by \emph{forward propagation} of the intervals of the relevant piecewise linear components. 

\section{Experiment} \label{sec:experiment}

We only highlight the main results. More details (methods for comparison, network structure, etc.) can be found in the Appendix \ref{app:experimental_details}.

\paragraph{Experimental setup.}
We compared our proposed method with the naive method, over-conditioning (OC) method, and Bonferroni correction.
To investigate the false positive rate (FPR) we considerd, 1000 null images 
$\bm X = (X_1, ..., X_n)$ 
and 1000 reference images 
$\bm X^{\rm ref} = (X^{\rm ref}_1, ..., x^{\rm ref}_n)$, where $\bm s = \bm s^{\rm ref} = \bm 0$
and $\bm \veps, \bm \veps^{\rm ref} \sim \NN(\bm 0, I_n)$,
 for each $n \in \{ 64, 256, 1024, 4096\} $.
 To investigate the true positive rate (TPR), we set $n = 256$ and generated 1,000 images, in which $s_i = {\rm signal}$ for any $i \in \cS$ where $\cS$ is the ``true'' salient region whose location is randomly determined. $s_i = 0$ for any $i \not \in \cS$ and $\bm \veps \sim \NN(\bm 0, I_n)$.
We set $\Delta \in \{1, 2, 3, 4\}$.
Reference images were generated in the same way as in the case of FPR.
In all experiments, we set the threshold for selecting the salient region $\tau = 0$ in the mean null test and $\tau = 5$ in the global null test . 
We set the significance level $\alpha = 0.05$.
%
%
We used CAM as the saliency method in all experiments.

\paragraph{Numerical results.}
The results of FPR control are presented in Fig. ~\ref{fig:fig_fpr}.
The proposed method, OC, and Bonferroni successfully controlled the FPR in both the mean and global null test cases, whereas the others could not.
Because naive methods failed to control the FPR, we no longer considered their TPR. 
The results of the TPR comparison are shown in Fig. ~\ref{fig:fig_tpr}.
The proposed method has the highest TPR in all cases.
The Bonferroni method has the lowest TPR because it is conservative owing to considering the number of all possible hypotheses.
The OC method also has a low TPR than the proposed method because it considers several extra conditions, which causes the loss of TPR.

\begin{figure}[!t]
\begin{minipage}{0.49\textwidth}
\begin{subfigure}{.49\linewidth}
  \centering
  \includegraphics[width=\linewidth]{./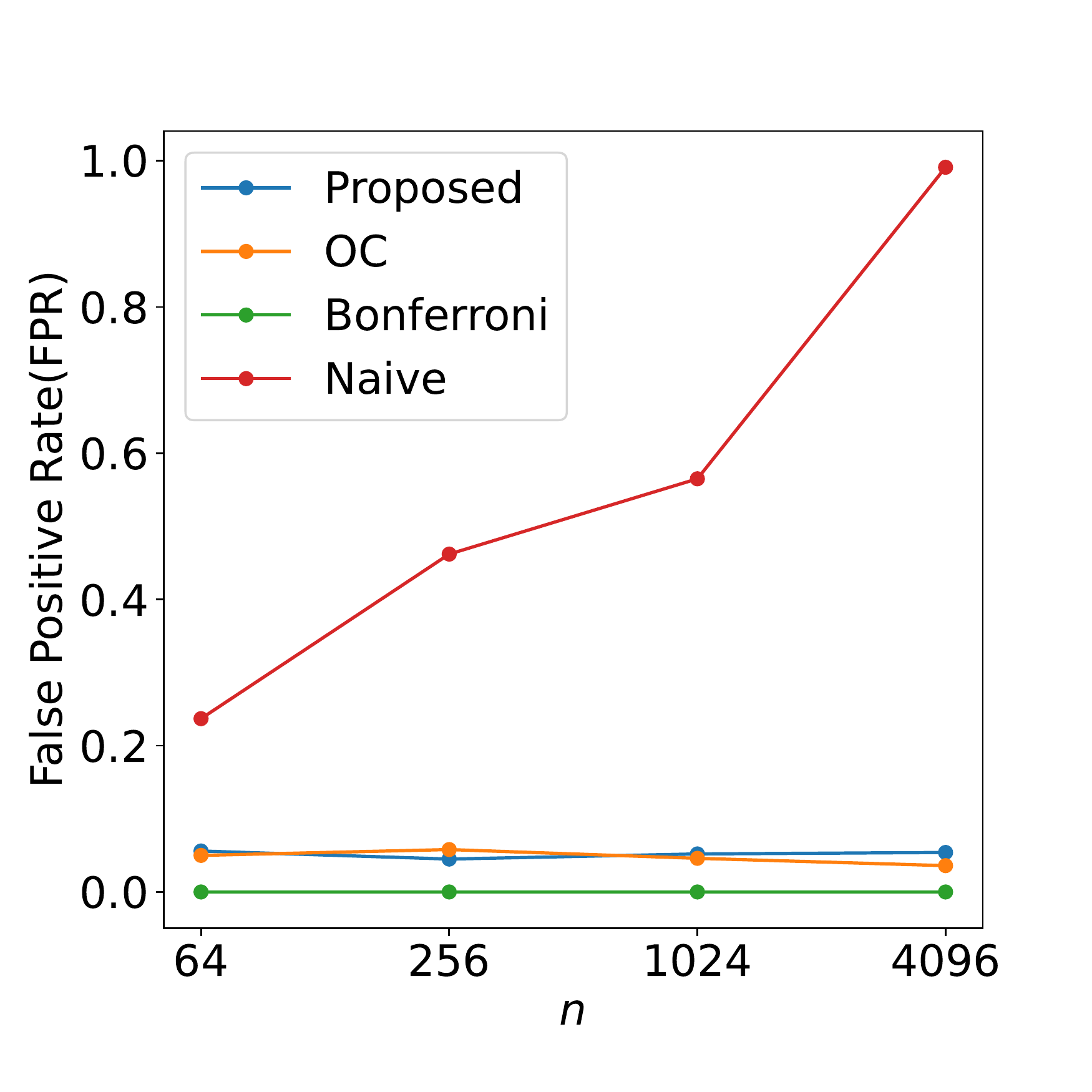}  
  \caption{Mean null test}
\end{subfigure}
\begin{subfigure}{.49\linewidth}
  \centering
  \includegraphics[width=\linewidth]{./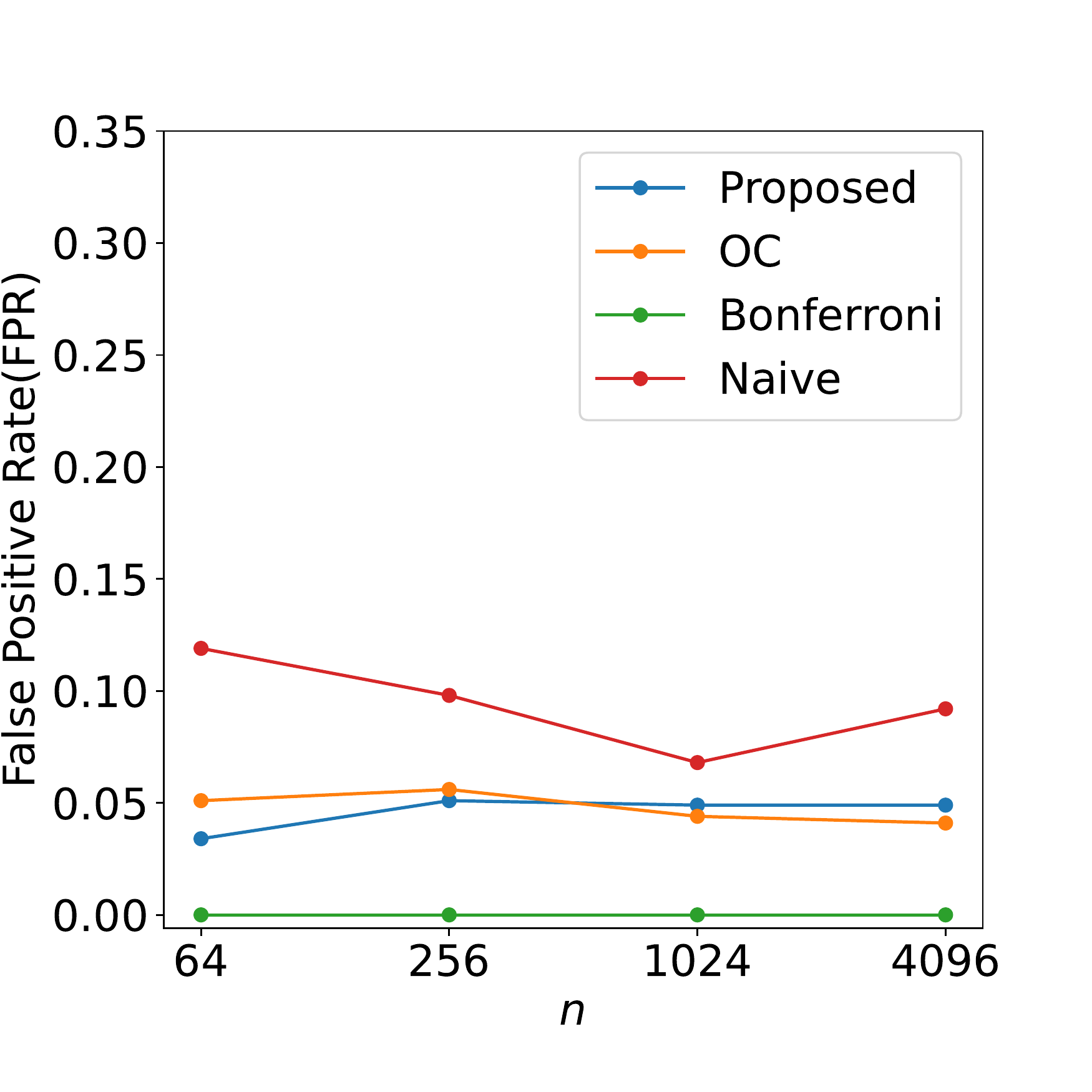}  
  \caption{Global null test}
\end{subfigure}
\caption{False Positive Rate (FPR) comparison.}
\label{fig:fig_fpr}
\end{minipage}\hfill
\begin{minipage}{0.49\textwidth}
\begin{subfigure}{.49\linewidth}
  \centering
  \includegraphics[width=\linewidth]{./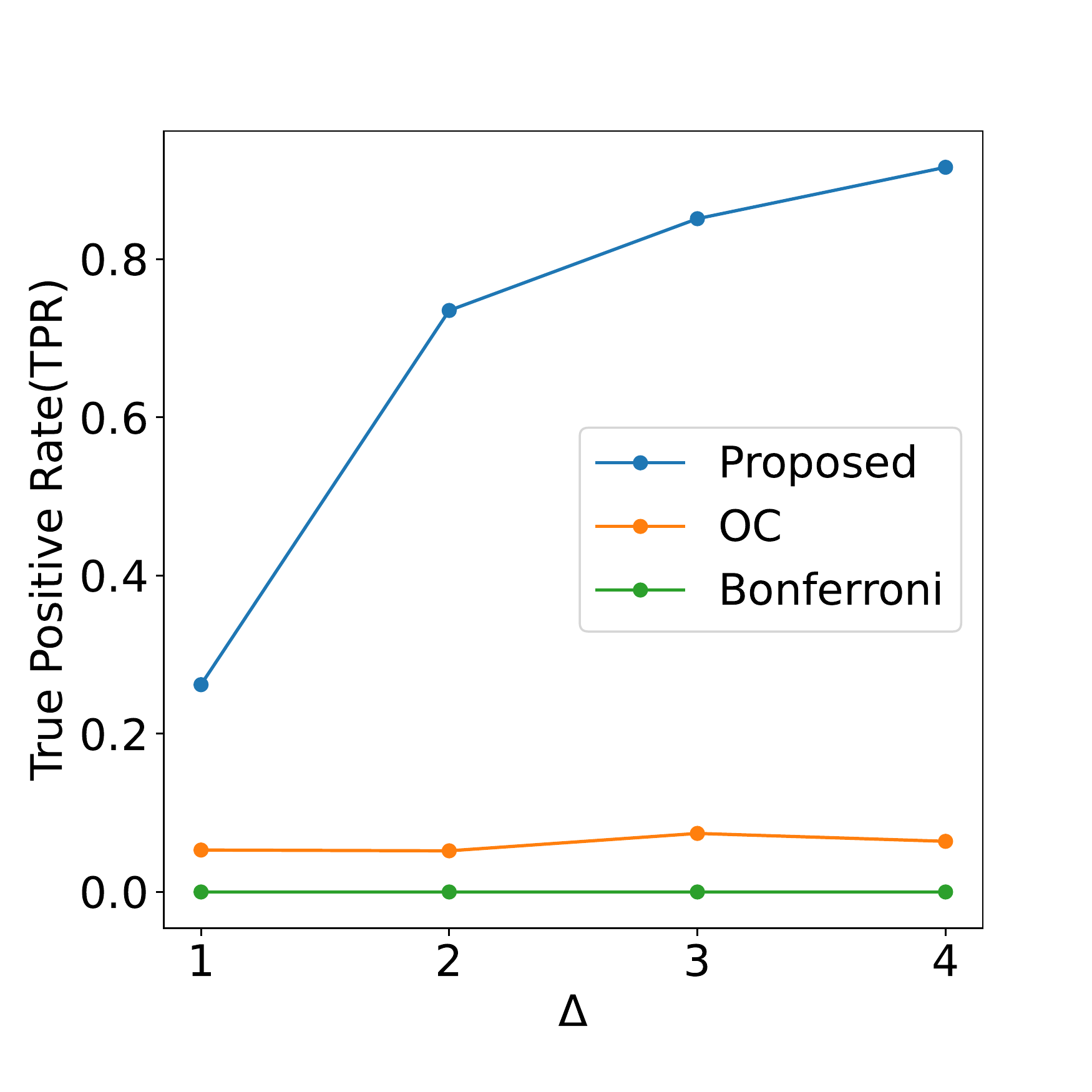}  
  \caption{Mean null test}
\end{subfigure}
\begin{subfigure}{.49\linewidth}
  \centering
  \includegraphics[width=\linewidth]{./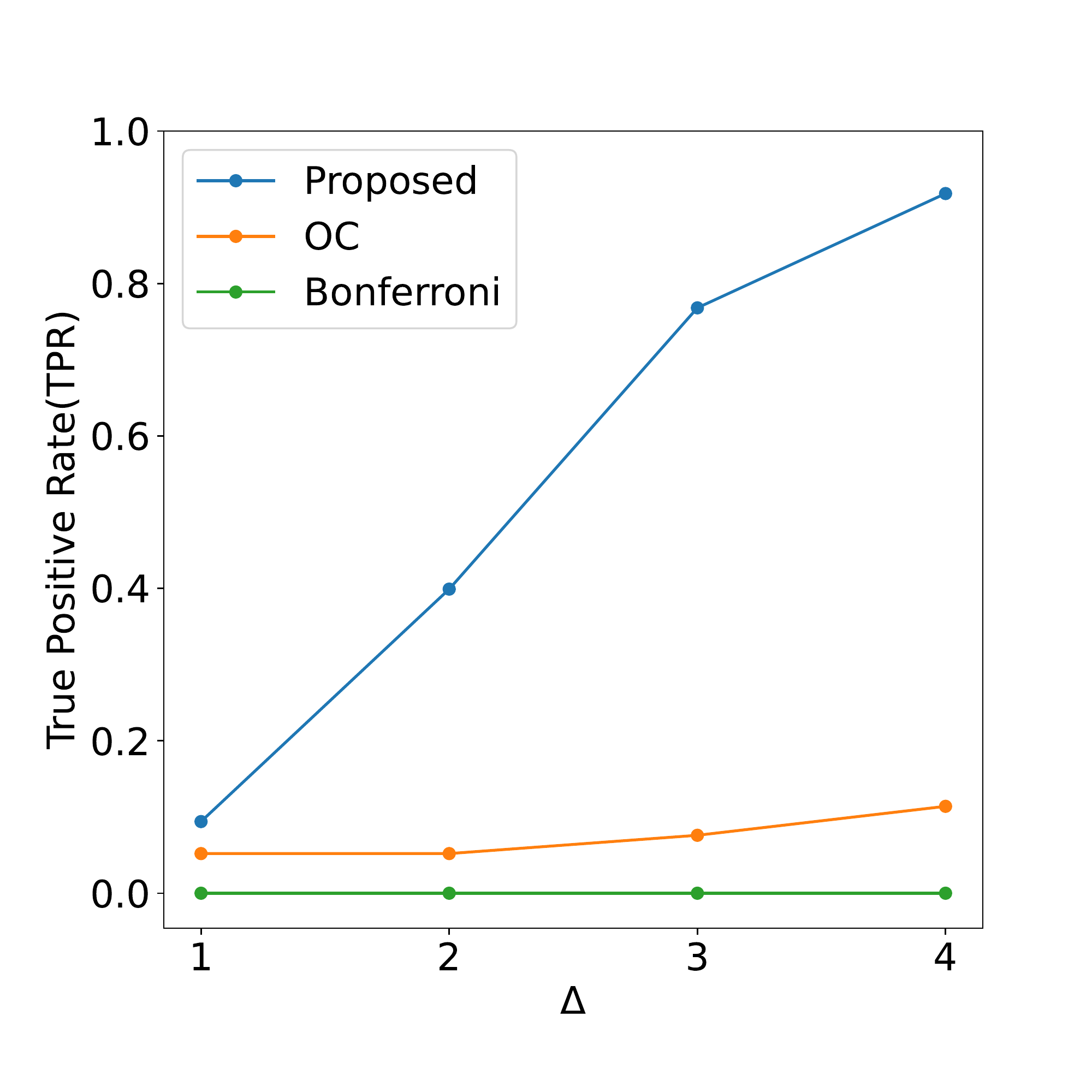}  
  \caption{Global null test}
\end{subfigure}
\caption{True Positive Rate (FPR) comparison.}
\label{fig:fig_tpr}
\end{minipage}
\end{figure}

\paragraph{Real data experiments.} We examined the brain image dataset extracted from the dataset used in \citet{buda2019association}, which included 939 and 941 images with and without tumors, respectively.
%
%
%
The results of the mean null test are presented in Figs. \ref{fig:real_demo_fp_mean} and \ref{fig:real_demo_tp_mean}.
The results of the global null test are presented in Figs. \ref{fig:real_demo_fp_global} and \ref{fig:real_demo_tp_global}.
The naive $p$-value remains small even when the image has no tumor region, which indicates that naive $p$-values cannot be used to quantify the reliability of DNN-based salient regions.
The proposed method successfully identified false positive and true positive detections.

\begin{figure*}[!t]
\centering
  \includegraphics[width=.9\linewidth]{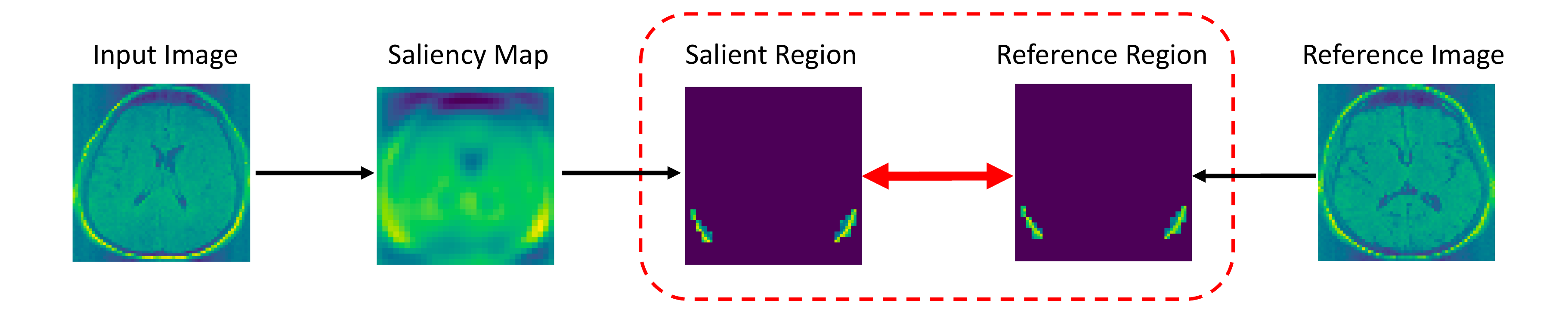}  
\caption{Mean null test for image without tumor ($p_{\rm naive} = \mathbf{0.00}$, $p_{\rm selective} = \mathbf{0.78}$).}
\label{fig:real_demo_fp_mean}
\end{figure*}

\begin{figure*}[!t]
\centering
  \includegraphics[width=.9\linewidth]{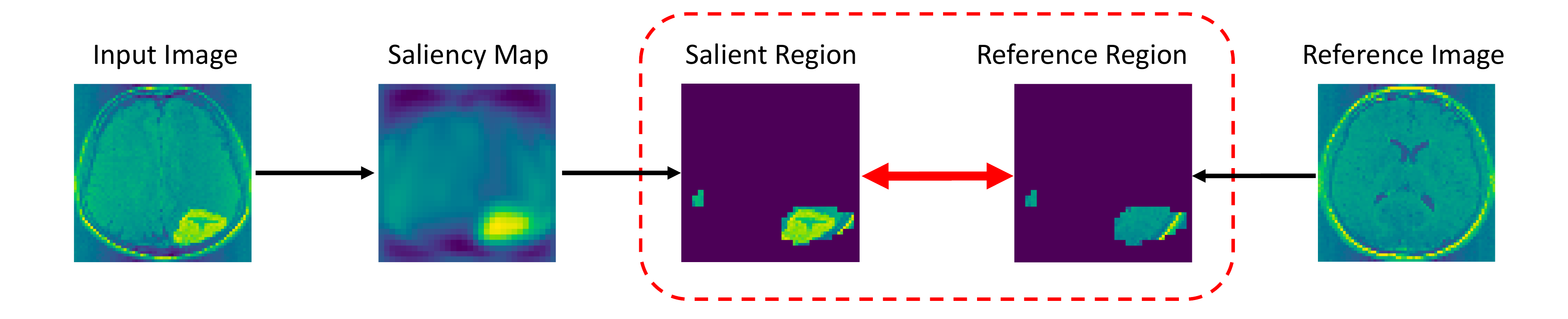}  
  \caption{Mean null test for image with a tumor ($p_{\rm naive} = \mathbf{0.00}$, $p_{\rm selective} = \mathbf{1.92\times10^{-4}}$).}
\label{fig:real_demo_tp_mean}
\end{figure*}

\begin{figure*}[!t]
\centering
  \includegraphics[width=.9\linewidth,page=1]{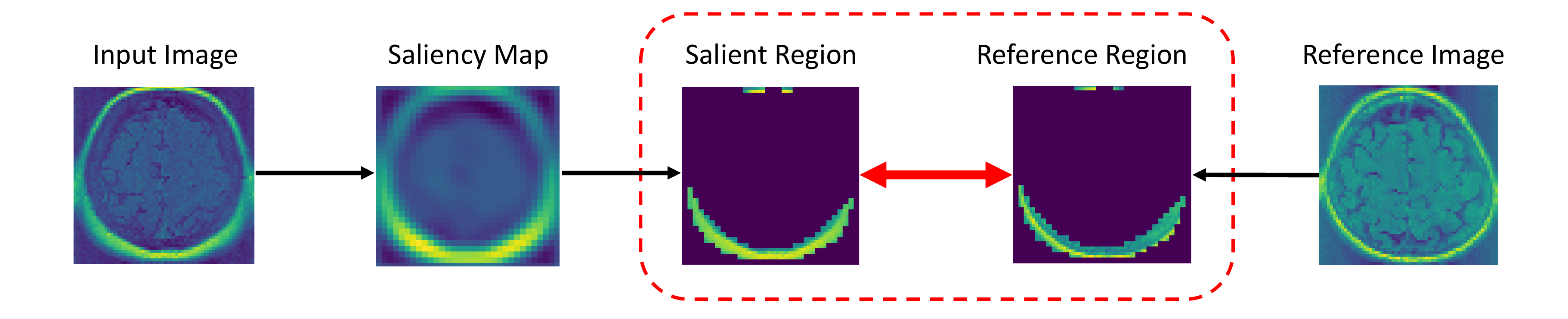}  
\caption{Global null test for image without tumor ($p_{\rm naive} = \mathbf{0.03}$, $p_{\rm selective} = \mathbf{0.46}$)}
\label{fig:real_demo_fp_global}
\end{figure*}

\begin{figure*}[!t]
\centering
  \includegraphics[width=.9\linewidth,page=3]{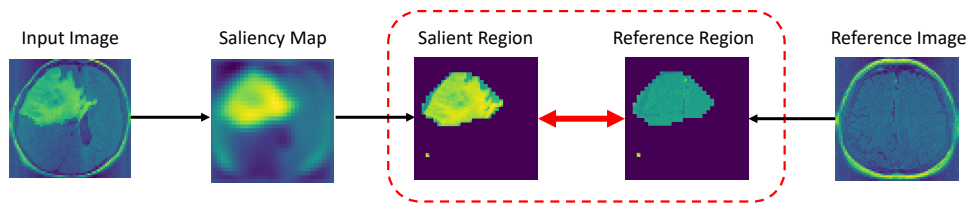}  
  \caption{Global null test for image with a tumor ($p_{\rm naive} = \mathbf{0.00}$, $p_{\rm selective} = \mathbf{1.51\times 10^{-3}}$).}
\label{fig:real_demo_tp_global}
\end{figure*}

\section{Conclusion} \label{sec:conclusion}

In this study, we proposed a novel method to conduct statistical inference on the significance of DNN-driven salient regions based on the concept of conditional SI.
We provided a novel algorithm for efficiently and flexibly conducting conditional SI for salient regions.
We conducted experiments on both synthetic and real-world datasets to demonstrate the performance of the proposed method.


\subsection*{Acknowledgements}
 This work was partially supported by MEXT KAKENHI (20H00601), JST CREST (JPMJCR21D3), JST Moonshot R\&D (JPMJMS2033-05), JST AIP Acceleration Research (JPMJCR21U2), NEDO (JPNP18002, JPNP20006), and RIKEN Center for Advanced Intelligence Project.


\bibliographystyle{abbrvnat}
\bibliography{ref}


\appendix

\section{Appendix}

\subsection{Proof of Lemma \ref{lemma:data_line}} \label{app:proof_lemma_1}

In the mean null test, according to the second condition in \eq{eq:conditional_data_space}, we have 
\begin{align*}
	\Omega_{\bm X, \bm X^{\rm ref}} &= \Omega_{\bm{X}_{\rm{obs}},\bm{X}_{\rm{obs}}^{\rm{ref}}}  \\ 
	\Leftrightarrow ~
	\left (
	I_{2n} - 
	\frac{
        \bm \eta_{\cM_{\bm X}} \bm \eta_{\cM_{\bm X}}^\top
	} { 
	\bm \eta_{\cM_{\bm X}}^\top   \bm \eta_{\cM_{\bm X}}
	} 
	\right ) 
	{ \bm X \choose \bm X^{\rm ref}}
	& = 
	\Omega_{\bm{X}_{\rm{obs}},\bm{X}_{\rm{obs}}^{\rm{ref}}} \\ 
	\Leftrightarrow ~
	{ \bm X \choose \bm X^{\rm ref}}
	&= 
    \Omega_{\bm{X}_{\rm{obs}},\bm{X}_{\rm{obs}}^{\rm{ref}}}
    + \frac{ \bm \eta_{\cM_{\bm X}}}{\lVert \bm \eta_{\cM_{\bm X}}\rVert^2}
	\bm \eta_{\cM_{\bm X}}^\top  
	{ \bm X \choose \bm X^{\rm ref}}.
\end{align*}
By defining 
$\bm a = \bm{q}_{\bm{X}_{\rm{obs}},\bm{X}_{\rm{obs}}^{\rm{ref}}} $,
$\bm b = \frac{ \bm \eta_{\cM_{\bm X}}} {\lVert \bm{\eta_{\cM_{\bm X}}} \rVert^2}$,
$z = \bm \eta_{\cM_{\bm X}}^\top  {\bm X \choose \bm X^{\rm ref}} $, we obtain the result in Lemma \ref{lemma:data_line}. 

In the global null test, according to the second condition in \eq{eq:conditional_data_space},

\begin{align*}
    \mathcal{U}_{\bm X, \bm X^{\rm ref}} 
    &= \mathcal{U}_{\bm{X}_{\rm{obs}},\bm{X}_{\rm{obs}}^{\rm{ref}}} \\ 
    \Leftrightarrow  P^\perp_{\cM_{\bm X}} { \bm X \choose \bm X^{\rm ref}}
    &= \mathcal{U}_{\bm{X}_{\rm{obs}},\bm{X}_{\rm{obs}}^{\rm{ref}}} \\ 
    \Leftrightarrow \left( I_{2_n} - P_{\cM_{\bm X}} \right) { \bm X \choose \bm X^{\rm ref}}
    &= \mathcal{U}_{\bm{X}_{\rm{obs}},\bm{X}_{\rm{obs}}^{\rm{ref}}} \\ 
    \Leftrightarrow { \bm X \choose \bm X^{\rm ref}} 
    &= \mathcal{U}_{\bm{X}_{\rm{obs}},\bm{X}_{\rm{obs}}^{\rm{ref}}} +
    \mathcal{V}_{\bm{X}_{\rm{obs}},\bm{X}_{\rm{obs}}^{\rm{ref}}}
    \sigma^{-1}\left\lVert P_{\cM_{\bm X}}{ \bm X \choose \bm X^{\rm ref}} \right\rVert
.\end{align*}

By defining 
$\bm a = \mathcal{U}_{\bm{X}_{\rm obs},\bm{X}_{\rm obs}^{\rm ref}} $,
$\bm b = \mathcal{V}_{\bm{X}_{\rm{obs}},\bm{X}_{\rm obs}^{\rm ref}} $,
$z = \sigma^{-1}\left\lVert P_{\cM_{\bm X}}{ \bm X \choose \bm X^{\rm ref}} \right\rVert$
, we obtain the result in Lemma \ref{lemma:data_line}. 

\subsection{Examples of piecewise linear functions} \label{app:example_piecewise_linear}

Examples of piecewise linear components in a trained CNN with $\bm X \in \RR^2$ are provided as follows:

\noindent
{\it ReLU}:
Consider $f$ is ReLU function.
Then, $K(f) = 4$, $\bm \psi_k = (0 ~ 0)^\top $ for any $k \in [4]$,
\begin{align*}
	\Psi_1^f = 
	\begin{pmatrix}
		0 & 0 \\ 
		0 & 0
	\end{pmatrix},
	~ 
	\cP_1^f = \left \{ 
	\bm X: 
	\begin{array}{l}
		X_1 < 0, \\ X_2 < 0
	\end{array}
	\right \}, 
	\quad 
	\Psi_2^f = 
	\begin{pmatrix}
		0 & 0 \\ 
		0 & 1
	\end{pmatrix},
	~ 
	\cP_2^f = \left \{ 
	\bm X: 
	\begin{array}{l}
		X_1 < 0, \\ X_2 \geq 0
	\end{array}
	\right \},
	\\ 
	\Psi_3^f = 
	\begin{pmatrix}
		1 & 0 \\ 
		0 & 0
	\end{pmatrix},
	~ 
	\cP_3^f = \left \{ 
	\bm X: 
	\begin{array}{l}
		X_1 \geq 0, \\ X_2 < 0
	\end{array}
	\right \}, 
	\quad 
	\Psi_4^f = 
	\begin{pmatrix}
		1 & 0 \\ 
		0 & 1
	\end{pmatrix},
	~ 
	\cP_4^f = \left \{ 
	\bm X: 
	\begin{array}{l}
		X_1 \geq 0, \\ X_2 \geq 0
	\end{array}
	\right \}.
\end{align*}
This can be similarly extended to the case of Leaky ReLU.

\noindent
{\it Max-pooling}:
Consider $f (\bm X) = {\rm max}\{ X_1, X_2\}$.
Then, it is represented as a piecewise linear function with $K(f) = 2$, $\bm \psi_k = (0) $ for any $k \in [2]$,
\begin{align*}
	\Psi_1^f = 
	\begin{pmatrix}
		1 & 0 
	\end{pmatrix},
	~ 
	\cP_1^f = \left \{ 
	\bm X: X_1 \geq X_2
	\right \}, 
	\quad 
	\Psi_2^f = 
	\begin{pmatrix}
		0 & 1
	\end{pmatrix},
	~ 
	\cP_2^f = \left \{ 
	\bm X: X_1 < X_2
	\right \}.
\end{align*}

\noindent
{\it Convolution and matrix-vector multiplication}:
In a neural network, the multiplication results between the weight matrix and the output of the previous layer and its summation with the bias vector is a linear function. 
In a CNN, the convolution operation is obviously a linear function.

\noindent
{\it Upsampling}:
Consider $f$ is the upsampling operation on 
$\bm X \in \RR^2$, then it can be represented as a piecewise linear function with $K(f) = 1$, $\bm \psi_1 = (0 ~ 0 ~ 0 ~ 0)^\top$,
\begin{align*}
	\Psi_1^f = 
	\begin{pmatrix}
		1 & 1 & 0 & 0 \\ 
		0 & 0 & 1 & 1
	\end{pmatrix}^\top,
	\quad 
	\cP_1^f = \RR^2.
\end{align*}

\noindent
{\it Sigmoid and hyperbolic tangent}:
If there is any specific demand to use non-piecewise linear activation functions, we can apply a piecewise-linear approximation approach to these functions.

\subsection{Proof of Lemma \ref{lemm:composition}} \label{app:proof_lemma_composition}

At $f_1$, given a a real value $z$, the input is $\bm \beta^{f_0} + \bm \gamma^{f_0} z = \bm a_{1:n} + \bm b_{1:n} z$. 
By checking the value of this input, we can easily obtain the polytope 
\[
\{ \Delta^{f_1}_{k_1} (\bm \beta^{f_0} + \bm \gamma^{f_0} z) \leq \bm \delta^{f_1}_{k_1}\}, \quad k_1 \in [K(f_1)],
\] 
that $\bm \beta^{f_0} + \bm \gamma^{f_0} z$ belongs to.
Based on the obtained polytope, we can calculate the interval $[L^{f_1}_{k_1}, U^{f_1}_{k_1}]$,
\begin{align*}
	L^{f_1}_{k_1}  
	= 
	\max \limits_{j : (\Delta^{f_1}_{k_1} \bm \gamma^{f_0})_j < 0}
	\frac{
	(\bm \delta^{f_1}_{k_1})_j - 
	(\Delta^{f_1}_{k_1} \bm \beta^{f_0})_j
	}{
	(\Delta^{f_1}_{k_1} \bm \gamma^{f_0})_j
	}
	\quad 
	\text{and}
	\quad 
	U^{f_1}_{k_1} 
	=
	\min \limits_{j : (\Delta^{f_1}_{k_1} \bm \gamma^{f_0})_j > 0}
	\frac{
	(\bm \delta^{f_1}_{k_1})_j - 
	(\Delta^{f_1}_{k_1} \bm \beta^{f_0})_j
	}{
	(\Delta^{f_1}_{k_1} \bm \gamma^{f_0})_j
	}.
\end{align*}
Moreover, based on the obtained polytope, we can easily obtain $\Psi^{f_1}_{k_1}$ and $\bm \psi^{f_1}_{k_1}$, $k_1 \in [K(f_1)]$.
Therefore, the output of the first layer at $z$ can be defined as 
\begin{align*}
	f_{1} (z) 
	& = \Psi^{f_1}_{k_1} (\bm \beta^{f_0} + \bm \gamma^{f_0} z) + \bm \psi^{f_1}_{k_1} \\ 
	& = \bm \beta^{f_1} + \bm \gamma^{f_1} z,
\end{align*}
where $\bm \beta^{f_1} = \Psi^{f_1}_{k_1} \bm \beta^{f_0} + \bm \psi^{f_1}_{k_1} $ and $\bm \gamma^{f_1}  = \Psi^{f_1}_{k_1} \bm \gamma^{f_0}$.
Next, we input $\bm \beta^{f_1}$, $\bm \gamma^{f_1}$ to $f_2$.

At the $2^{\rm nd}$ layer, similarly, the input is $\bm \beta^{f_1} + \bm \gamma^{f_1} z$. 
By checking the value of this input, we can easily obtain the polytope 
\[
\{ \Delta^{f_2}_{k_2} (\bm \beta^{f_1} + \bm \gamma^{f_1} z) \leq \bm \delta^{f_2}_{k_2}\}, \quad k_2 \in [K(f_2)],
\] 
that $\bm \beta^{f_1} + \bm \gamma^{f_1} z$ belongs to.
Based on the obtained polytope, we can calculate the interval $[L^{f_2}_{k_2}, U^{f_2}_{k_2}]$,
\begin{align*}
	L^{f_2}_{k_2}  
	= 
	\max \limits_{j : (\Delta^{f_2}_{k_2} \bm \gamma^{f_1})_j < 0}
	\frac{
	(\bm \delta^{f_2}_{k_2})_j - 
	(\Delta^{f_2}_{k_2} \bm \beta^{f_1})_j
	}{
	(\Delta^{f_2}_{k_2} \bm \gamma^{f_1})_j
	}
	\quad 
	\text{and}
	\quad 
	U^{f_2}_{k_2} 
	=
	\min \limits_{j : (\Delta^{f_2}_{k_2} \bm \gamma^{f_1})_j > 0}
	\frac{
	(\bm \delta^{f_2}_{k_2})_j - 
	(\Delta^{f_2}_{k_2} \bm \beta^{f_1})_j
	}{
	(\Delta^{f_2}_{k_2} \bm \gamma^{f_1})_j
	}.
\end{align*}
Moreover, based on the obtained polytope, we can easily obtain $\Psi^{f_2}_{k_2}$ and $\bm \psi^{f_2}_{k_2}$, $k_2 \in [K(f_2)]$.
Therefore, the output of the first layer at $z$ can be defined as 
\begin{align*}
	f_{2} (z) 
	& = \Psi^{f_2}_{k_2} (\bm \beta^{f_1} + \bm \gamma^{f_1} z) + \bm \psi^{f_2}_{k_2} \\ 
	& = \bm \beta^{f_2} + \bm \gamma^{f_2} z,
\end{align*}
where $\bm \beta^{f_2} = \Psi^{f_2}_{k_2} \bm \beta^{f_1} + \bm \psi^{f_2}_{k_2} $ and $\bm \gamma^{f_2}  = \Psi^{f_2}_{k_2} \bm \gamma^{f_1}$.

\subsection{Experimental details.} \label{app:experimental_details}

\paragraph{Methods for comparison.} We compared our proposed method with the following approaches:

$\bullet$ Naive: the classical $z$-test is used to calculate the naive $p$-value.

$\bullet$ Bonferroni: the number of all possible hypotheses are considered to account for the selection bias.
The $p$-value is computed by $p_{\rm bonferroni} = \min (1, p_{\rm naive} * 2^n)$

$\bullet$ Over-conditioning (OC): additionally conditioning on the observed activeness and inactiveness of all the nodes. The limitation of this method is its low statistical power due to over-conditioning.

\paragraph{Network structure.} In all the experiments, we used the network structure shown in Fig. \ref{fig:network}.

\begin{figure*}[!t]
\centering
  \centering
  \includegraphics[width=.8\linewidth]{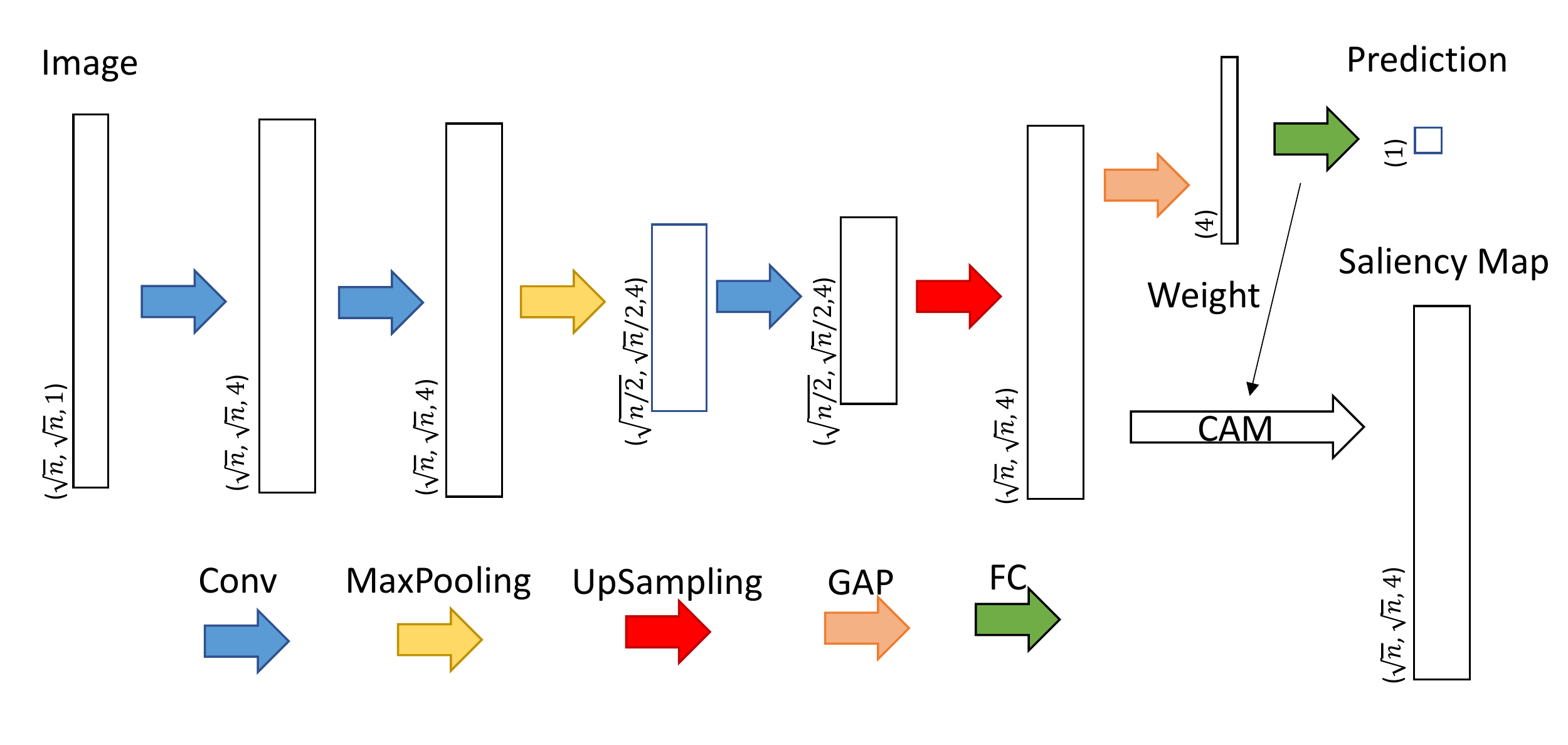}  
\caption{Network structure.}
\label{fig:network}
\end{figure*}

%


\paragraph{Experimental setting on brain image dataset.}
We examine the brain image dataset extracted from the dataset used in \cite{buda2019association}, which includes 941 images without tumors ({\tt C1}) and 939 images with tumors ({\tt C2}).
We selected $50$ images from {\tt C1} as reference images.
We used $841$ images from {\tt C1} and $889$ images from {\tt C2} for DNN training.
The remaining images from {\tt C1} and {\tt C2} are used for demonstrating the advantages of the proposed selective $p$-value.

\paragraph{More results on brain image dataset.} Additional results are shown in Figs. \ref{fig:real_demo_fp_global_more}, \ref{fig:real_demo_tp_global_more}, \ref{fig:real_demo_fp_mean_more} and \ref{fig:real_demo_tp_mean_more}

\clearpage

\begin{figure*}[!t]
\centering
\begin{subfigure}{\textwidth}
  \centering
  \includegraphics[width=.95\linewidth]{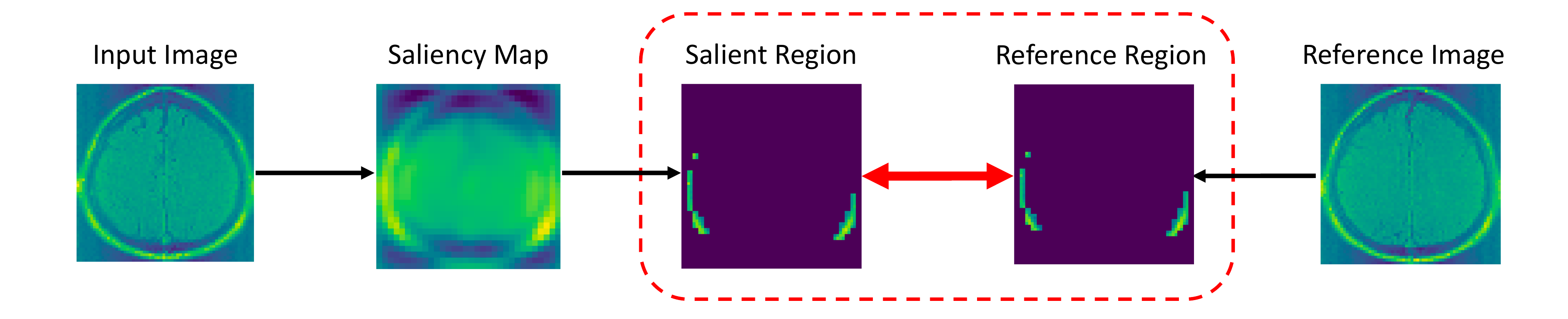}  
  \caption{$p_{\rm naive} = \mathbf{0.01}$, $p_{\rm selective} = \mathbf{0.47}$}
\end{subfigure}
\caption{Inference on salient regions for images without tumor (mean null test).}
\label{fig:real_demo_fp_mean_more}
\end{figure*}

\begin{figure*}[!t]
\centering
\begin{subfigure}{\textwidth}
  \centering
  \includegraphics[width=.95\linewidth]{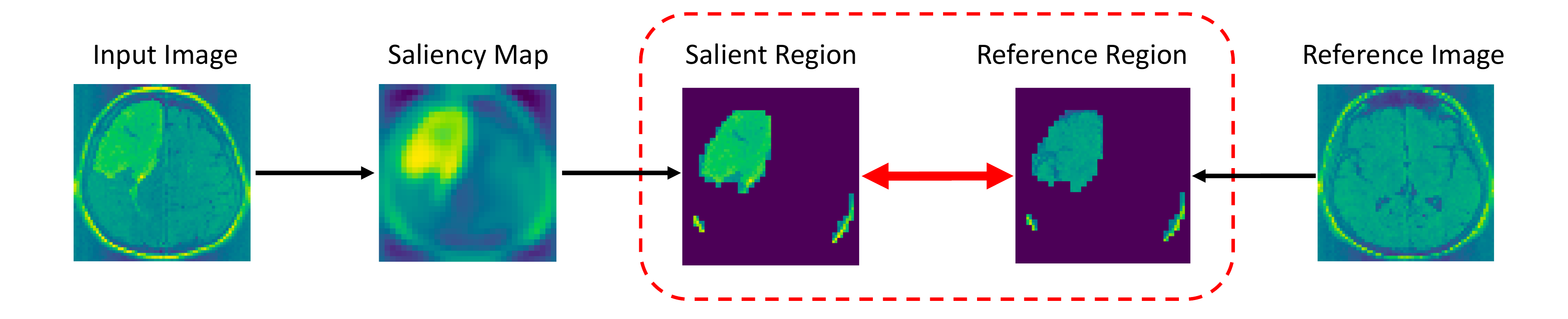}  
  \caption{$p_{\rm naive} = \mathbf{0.00}$, $p_{\rm selective} = \mathbf{2.82\times 10^{-4}}$}
\end{subfigure}
\caption{Inference on salient regions for images where there exists a tumor (mean null test).}
\label{fig:real_demo_tp_mean_more}
\end{figure*}

\begin{figure*}[!t]
\centering
\begin{subfigure}{\textwidth}
  \centering
  \includegraphics[width=.95\linewidth,page=2]{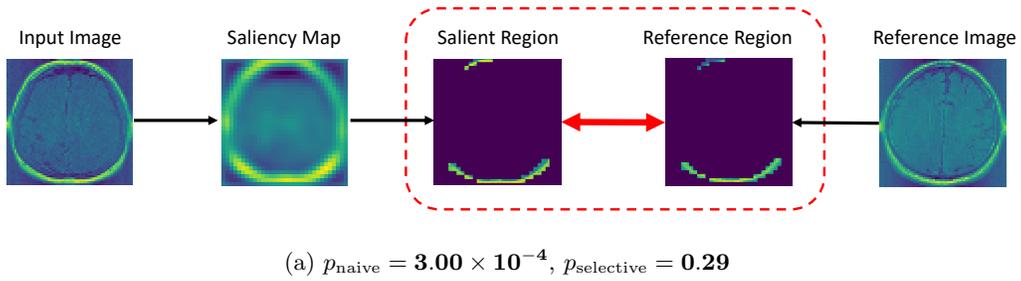}  
  \caption{$p_{\rm naive} = \mathbf{3.00\times{10}^{-4}}$, $p_{\rm selective} = \mathbf{0.29}$}
\end{subfigure}
\caption{Inference on salient regions for images without tumor (global null test).}
\label{fig:real_demo_fp_global_more}
\end{figure*}

\begin{figure*}[!t]
\centering
\begin{subfigure}{\textwidth}
  \centering
  \includegraphics[width=.95\linewidth,page=4]{real-data-global.pdf}  
  \caption{$p_{\rm naive} = \mathbf{0.00}$, $p_{\rm selective} = \mathbf{2.66\times 10^{-20}}$}
\end{subfigure}
\caption{Inference on salient regions for images where there exists a tumor (global null test).}
\label{fig:real_demo_tp_global_more}
\end{figure*}

\clearpage

\end{document}